\title{Restricted Manipulation in Iterative Voting: \\Convergence and Condorcet Efficiency} 
\author{
Umberto Grandi
\institute{University of Padova}
\email{umberto.uni@gmail.com}
\and
Andrea Loreggia
\institute{University of Padova}
\email{andrea.loreggia@gmail.com}
\and
Francesca Rossi
\institute{University of Padova}
\email{frossi@math.unipd.it}
\and
Kristen Brent Venable 
\institute{Tulane University and IHMC}
\email{kvenabl@tulane.edu}
\and
Toby Walsh
\institute{NICTA and UNSW}
\email{toby.walsh@nicta.com.au}
}
\newcommand{\I}{\mathcal I}
\newcommand{\X}{\mathcal X}
\renewcommand{\b}{\mathbf b}
\newcommand{\Mo}{{\text{M1}}}
\newcommand{\Mt}{{\text{M2}}}
\newtheorem{definition}{Definition}
\newtheorem{theorem}{Theorem}
\begin{document}
\maketitle%9 pages, on last page only references
\

%%%%%%%%%%%%%%%%%%%%%%%%%%%%%%%%%%%%%%%%%%%%%%%%%%%%%
\begin{abstract}
\begin{quote}

In collective decision making, where a voting rule is used to take a collective decision among a group of agents, manipulation by one or more agents is usually considered negative behavior to be avoided, or at least to be made computationally difficult for the agents to perform. 
However, there are scenarios in which a restricted form of manipulation can instead be beneficial. In this paper we consider the iterative version of several voting rules, where at each step one agent is allowed to manipulate by modifying his ballot according to a set of restricted manipulation moves which are computationally easy and require little information to be performed. 
We prove convergence of iterative voting rules when restricted manipulation is allowed, and we present experiments showing that most iterative voting rules have a higher Condorcet efficiency than their non-iterative version.

%We study several axiomatic properties of iterative voting rules with respect to the corresponding one-step voting rules, as well as their convergence. For two manipulation moves, which are computationally easy and require little information to be performed, we show that convergence is guaranteed for many voting rules. For those procedures which are not Condorcet-consistent, we also study the Condorcet efficiency of their iterated version, via experimental tests on randomly generated profiles. Our experiments show that in most cases the iterative rules have a better performance than the one-step rules in electing the Condorcet winner. 
%

%We study the iteration of a voting rule, allowing one individual to manipulate her current ballot at every step.
%%We focus on PSR (plurality, veto, borda, $\frac{n}{2}$-approval) and STV. 
%We propose two manipulation moves for agents, we prove the convergence of the associated iterated voting procedure (except for STV), and we evaluate the efficiency of the iterated version of these voting rules with respect to electing the Condorcet winner.
\end{quote}
\end{abstract}

%%%%%%%%%%%%%%%%%%%%%%%%%%%%%%%%%

\section{Introduction}

In multi-agent systems, often agents need to take a collective decision. 
A voting rule can be used to decide which decision to take, mapping the agents' preferences over the possible candidate decisions into a winning decision for the collection of agents.
In this kind of scenarios, it seems desirable that agents do not have any incentive to manipulate, that is, to misreport their preferences in order to influence the result of the voting rule in their favor.

Manipulation is indeed usually seen as bad behavior from agents, to be avoided or at least to be made computationally difficult to accomplish. While we know that every voting rule is manipulable when no domain restriction is imposed on the agents' preferences (such as single-peakedness), we can try to make sure that a voting rule is computationally difficult to manipulate for single agents or coalitions of agents.
%In these scenarios where agents' preferences are aggregated only once by a voting rule,
%the presence of manipulators is not fair to sincere agents, since they can be penalized.

In this paper we consider a different setting, in which instead manipulation is allowed in a fair way.
As in the usual case, we start with agents expressing their preferences over a set of candidates and the voting rule selecting the current winner. 
However, this is just a temporary winner, since at this point a single agent may decide to manipulate, that is, to change her preference if by doing so the result changes in her favor. The process repeats with a new agent manipulating until we eventually reach a convergence state, i.e., a profile where no single agent can get a better result by manipulating. We call such a process \emph{iterative voting}.
In this scenario, manipulation can be seen as a way to achieve consensus, to give every agent a chance to vote strategically (a sort of fairness), and to account for inter-agent influence over time.

A practical example of this process is Doodle,\footnote{http://doodle.com/} a very popular
on-line system to select a time slot for a meeting by considering the preferences of the 
participants.
In Doodle, each participant can approve as many time slots as she wants, and the winning time slot is the one with the largest number of approvals. At any point, each participant can modify her vote in order to get a better result, and this can go on for several steps.
%Notice also that many results about worst case computational complexity of manipulation rely on 
%the fact that the manipulator knows the preferences of all other agents. This is in practice not true in the classical scenario, but it is instead intrinsically true
Depending on the voting rule, on the tie-breaking rule (to be used when there are several tied winners), and on the possible manipulation moves (that is, how agents 
are allowed to change their preferences in a single step), we may get convergence or not.
We will say that the iterative version of a voting rule converges if it gets to a stable state no matter the initial profile.
%We assume however that the initial profile is made of sincere votes.
%We will distinguish between deterministic and random tie-breaking. Linear tie-breaking (where there is a fixed ordering of the candidates to be used when handling ties) is a special case of deterministic tie-breaking.
 
Iterative voting has been the subject of numerous publications in recent years. Previous work has focused on iterating the plurality rule \cite{MeirEtAl2010} and on the problem of convergence for several voting rules~\cite{LevRosenscheinAAMAS2012}.
%, on the convergence of plurality decisions between multiple agents \cite{AiriauEndrissADT2009} and on the problem of analyzing epistemic conditions of strategic voting \cite{ChopraEtAl2004}.
Lev and Rosenschein~\cite{LevRosenscheinAAMAS2012} showed that, if we allow agents to manipulate in any way they want (i.e.,~to provide their best response to the current profile), then the iterative version of most voting rules do not converge. 
%The role of the tie-breaking rule is crucial in these results.
Therefore, an interesting problem is to seek restrictions on the manipulation moves to guarantee convergence of the associated iterative rule. 
Restricted manipulation moves are good not only for convergence, but also because 
they can be easier to accomplish for the manipulating agent. In fact, contrarily to what we aim for in classical voting scenarios, here we do not want manipulation to be computationally difficult to achieve. It is actually desirable that the manipulation move be 
easy to compute while not requiring too much information to be computed.

An example of a restricted manipulation move is the one for agents called $k$-pragmatists by Reijngoud and Endriss \cite{ReijngoudEndrissAAMAS2012}:
a $k$-pragmatist just needs to know the top $k$ candidates in the collective candidate order, and will move the most preferred 
of those candidates to the top position of her preference. 
To accomplish this move, a k-pragmatist needs very little information and it is computationally easy to perform the move. This move assures convergence with a number of voting rules.
%Note that each agent can apply this manipulation rule only once (since the top k candidates are always the same), and this is the main reason for convergence.

In this paper we introduce two restricted manipulation moves within the scenario of iterative voting and we analyze some of their theoretical and practical properties. 
\begin{comment}
Both manipulation moves we introduce are polynomial to compute, and the information they require varies depending on the voting rule considered. 
Convergence is guaranteed for most voting rules considered, except for the case of single-transferable vote for which we only have experimental evidence of convergence.

The first restriction according to which an agent moves to the top its  
most preferred candidate which is above the current winner in its true preferences, and M2, according to which 
an agent moves to the top its most preferred candidate which is above the current winner in the current (possibly non-thrutful)
vote. 
We assume an agent performs the move only if by doing so the winner changes.
\end{comment}
Both manipulation moves we consider are polynomial to compute and 
require little information to be used.
% about the current winner and the current scores of the candidates (for scoring rules)
%or the majority graph (for majority-graph based voting rules).
%M1 obviously assures convergence of every iterative voting rule: as for the k-pragmatist move, M1 can be applied at most once by each agent. 
%M2 can instead be applied several times by the same agent, but assures convergence of every positional scoring rule, as well as Copeland and Maximin. We don't have a convergence proof for STV, but experimental results show that it always converges in practice.
We show that convergence is guaranteed under both moves, except for STV for which we only have experimental evidence of convergence.
Moreover, we show that if a voting rule satisfies some axiomatic properties, such as Condorcet consistency or unanimity, then its iterative version will also satisfy the same properties as well.
%Moreover, if a voting rule is unanimous, also its iterative version with M2 is so. The same for Condorcet consistency. 
For voting rules that are not Condorcet consistent, we tested experimentally whether their Condorcet efficiency (that is, the probability to elect the Condorcet winner) improves by adopting the iterative version. Our experiments show that the Condorcet efficiency improves when restricted manipulation moves are used.
%We also tested manipulation moves with respect to the position of the winner in the initial profile, and we showed that this parameter increases significantly when restricted manipulation is allowed.

The paper is organized as follows. In Section~\ref{sec:setting} we introduce the basic definitions of iterative voting and we define two new restricted manipulation moves. Section~\ref{sec:theoretical} contains theoretical results on convergence and preservation of axiomatic properties, and in Section~\ref{sec:experimental} we present our experimental evaluation of restricted iterative voting. 
Section~\ref{sec:conclusions} contains our conclusions and directions for future research.

%Classical voting is a process in which a number of individuals submits their preferences and these preferences are aggregated into a (set of) winner. Iterative voting repeats this process by allowing one individual at each step to manipulate the election by misrepresenting her preferences. The process converges when all individuals are satisfied by the current winner of the election, or when nobody can get a better winner by manipulating. A classical example is Doodle, in which individuals can modify their vote after having seen the result of the poll, or more complex examples such as Magenda.
%
%While manipulation is traditionally seen as a bad action, in iterative voting this behaviour is exploited to reach consensus taking into account inter-agent influences.

%%%%%%%%%%%%%%%%%%%%%%%%%%%%%%%%%%%%%%%%%%%%%%%%%%%

\section{Background Notions}\label{sec:setting}

%%%%%%%%%%%%%%%%%%%%%%%%%%%%%%%%%%%%%%%%%%%%

In this section we recall the basic notions of voting theory that we shall use in this paper, we present the setting of iterative voting, and we define a number of restrictions on the manipulation moves that agents can perform.

%%%%%%%%%%%%%%%%%%%
\subsection{Voting Rules}
%%%%%%%%%%%%%%%%%%%%

Let $\X$ be a finite set of $m$ candidates and $\I$ be a finite set of $n$ individuals. 
We assume individuals have preferences $p_i$ over candidates in $\X$ in the form of \emph{strict linear orders}, i.e., transitive, anti-symmetric and complete binary relations. Individuals express their preferences in form of a ballot $b_i$ (e.g., the top candidate, a set of approved candidates, or the full linear order) and we call the choice of a ballot for each individual a profile $\b=(b_1,\dots,b_n)$. 
In this paper, we assume that individuals submit as a ballot for the election their full linear order, and we thus use the two notions of ballot and preference interchangeably. 
%$b_i$ is also called a \emph{ballot}.

A (non-resolute) \emph{voting rule} $F$ associates with every profile $\b =(b_1,\dots,b_n)$ a non-empty subset of winning candidates $F(\b)\in 2^\X\setminus \emptyset$. 
There is a wide collection of voting rules that have been defined in the literature \cite{BramsFishburn2002} and here we focus on the following ones:

\begin{description}

\item \emph{Positional scoring rules (PSR)}: Let $(s_1, \dots, s_m)$ be a scoring vector such that $s_1 \geq \dots \geq s_m$ and $s_1 > s_m$.
If a voter ranks candidate $c$ at $j$-th position in her ballot, this gives $s_j$ points to the candidate. The candidates with the highest score win. 
We focus on four particular PSR: \emph{Plurality} with scoring vector $(1,0,\dots,0)$, \emph{veto} with vector $(1,\dots ,1,0)$, \emph{$2$-approval} with vector $(1,1,0,\dots,0)$, \mbox{\emph{$3$-approval}} with vector $(1,1,1,0,\dots,0)$, and \emph{Borda} with vector \mbox{$(m-1,m-2,\dots, 0)$.}

\item \emph{Copeland}: The score of candidate $c$ is the number of pairwise comparisons she wins (i.e., contests between $c$ and another candidate $a$ such that there is a majority of voters preferring $c$ to $a$) minus the number of pairwise comparisons she loses. The candidates with the highest score win.

\item \emph{Maximin}: The score of a candidate $c$ is the smallest number of voters preferring it in any pairwise comparison. The candidates with the highest score win.

\item \emph{Single Transferable Vote (STV)}: At the first round the candidate that is ranked first by the fewest number of voters gets eliminated (ties are broken following a predetermined order of candidates). Votes initially given to the eliminated candidate are then transferred to the candidate that comes immediately after in the individual preferences.
This process is iterated until one alternative is ranked first by a majority of voters.

\end{description}

\noindent
All rules considered thus far are non-resolute, i.e., they associate a set of winning candidates with every profile of preferences. To eliminate ties in the outcome we assume that the set $\X$ of candidates is ordered by $\prec_\X$, and in case of ties the alternative ranked highest by $\prec_\X$ is chosen as the unique outcome. 

%%%%%%%%%%%%%%%
\subsection{Iterative Voting}
%%%%%%%%%%%%%%%

A classical problem studied in voting theory is that of manipulation: do individuals have incentive to misreport their preferences, in order to force a candidate they prefer as winner of the election? 
The Gibbard-Satterthwaite Theorem \cite{Gibbard1973, Satterthwaite1975} showed that under natural conditions all voting rules can be manipulated.
Following this finding, a considerable amount of work has been spent on devising conditions to avoid manipulation, e.g., in form of restrictive conditions on individual preferences, or in form of computational barriers that make the calculation of manipulation strategies too hard for agents \cite{BartholdiOrlin1991,FaliszewskiProcaccia2010}. 

In this paper, we take a different stance on manipulation: 
we consider the fact that individuals are allowed to change their preferences as a positive aspect of the voting process, that may eventually lead to a better result after a sufficient number of steps.
Thus, we consider a sequence of repeated elections in which at each step one of the individuals is allowed to manipulate, i.e., to modify her ballot in order to change the outcome of the election in her favor.
The iteration process starts at $\b^0$ (which we shall refer to as the truthful profile) and continues to $\b^1,\dots,\b^k$. 
At each step only one individual $\tau(k)$ is allowed to manipulate, following a turn function $\tau$ (e.g., $\tau$ follows the order in which individuals are given), while all other individual ballots remain unchanged. 

%\begin{definition}
%Given a voting rule $F$ we denote $F^{*,\tau}$ the iterated version of $F$ using turn function $\tau$, which associates with every profile $\b$ the outcome of the iteration of $F$ if this converges, and symbol $\uparrow$ otherwise.
%\end{definition}

%\noindent
%An iterated voting rule converges if it converges on every initial profile $\b$, i.e., there exists a $t_0$ such that no individual has incentive to manipulate after $t_0$ steps of iteration.

\begin{comment}
\begin{definition}
Given a voting rule $F$ and an integer $t$ we denote $F^*_t$ the iterated version of $F$, which associates with every profile $\b$ the outcome of the iteration of $F$ using turn function $\tau$ after $t$ steps, i.e., $F^*_t(\b)=F(\b^t)$.
\end{definition}

\noindent
An iterated voting rule $F^*_t$ converges if there exists $t_0$ such that for all $t\geq t_0$ we have that $F^*_t=F^*_{t_0}$, i.e., no individual has incentive to manipulate after $t_0$ steps of iteration.
\end{comment}

The setting of iterative voting was first introduced and studied by \cite{MeirEtAl2010} for the case of the plurality rule, and expanded by \cite{LevRosenscheinAAMAS2012}. In their work, the authors describe the iterated election process as a \emph{voting game}, in which convergence of the iterative process corresponds to reaching a Nash equilibria of the game. 
They show that convergence is rarely guaranteed with most voting rules under consideration:
for instance, the iterative version of PSRs and Maximin do not always converge, even with deterministic tie-breaking (i.e., not randomized). 
On the other hand, plurality always converges with any tie-breaking rule, as well as veto with linear tie-breaking. 
%The tie-breaking rule is a crucial factor to obtain convergence. For instance, veto does not always converge if we choose a non-linear tie-breaking rule, and Copeland does not always converge even if we choose a linear tie-breaking rule, as we show in the following example:
%\footnote{We refer to the work of \cite{Loreggia2012} for an extensive review of results studying the convergence of iterative voting rules.} 

%\begin{example}
%Let there be two voters and three candidates, and let the tie-breaking rule be $c\succ b \succ a$. The initial profile $\b^0$ has the following preference for the first voter $a>b>c$ and for the second voter \mbox{$c>a>b$}. 
%Candidate $a$ wins in pairwise comparison with $b$, and no other candidate win any other pairwise comparison. Thus, the winner using the Copeland rule is $a$.
%Voter~2 has now an incentive to change her preferences to $c>b>a$, in which case no candidate is winning any pairwise comparison, and by the tie-breaking order the winner is $c$, which is preferred by voter~2 in her truthful preference.
%Now voter~1 is unhappy, and changes her ballot to $b>a>c$ to force candidate $b$ as the winner. This result in a further incentive for voter~2 to change her ballot to $c>a>b$, again forcing $c$ as winner of the election. Finally, voter~1 changes again her ballot to $a>b>c$ to obtain $a$ as winner of the election, moving back to the initial profile and creating a cycle of iterated manipulation.
%\end{example}

%%%%%%%%%%%%%%%%%%%%%%%%%%%%%
\subsection{Restricted Manipulation Moves}
%%%%%%%%%%%%%%%%%%%%%%%%%%%%%%%%%

The convergence of the iterated version of a voting rule can be obtained by restricting the set of manipulation strategies available to the agents.
%, as was first proposed by \cite{ReijngoudEndrissAAMAS2012}. 
We now list a number of restrictions that have been studied in the literature, and we add two new definitions to this list.
Let $\b^k$ be the current profile at step~$k$, $\b^0$ be the initial (truthful) profile, and $F$ be a voting rule. Assume that $\tau(k)=i$.

\begin{description}
\item \emph{Best response} (no restriction): the manipulator $i$ changes her full ballot by selecting the linear order which results in the best possible outcome for her truthful preference $b^0_i$ \cite{LevRosenscheinAAMAS2012}.

\item \emph{k-pragmatist}: the manipulator $i$ moves to the top of her reported ballot the most preferred candidate following $b^0_i$ among those that scored in the top $k$ positions \cite{ReijngoudEndrissAAMAS2012}. 

\item \emph{M1}: the manipulator $i$ moves to the top of her reported ballot the second-best candidate in $b^0_i$, unless the current winner $w=F(\b^k)$ is already her best or second-best candidate in $b^0_i$.

\item \emph{M2}: the manipulator $i$ moves to the top of her reported ballot the most preferred candidate in $b_i^0$ which is above $w=F(\b^k)$ in $b_i^k$, among those that can become the new winner of the election.
\end{description}

%commented out - non-operative description
\begin{comment}
\begin{description}
\item \emph{Best response} (no restriction): $b_i^{k+1}$, i.e., $i$'s manipulated ballot, is the linear order that results in the best possible outcome for agent's $i$ truthful preferences $b^0_i$ \cite{LevRosenscheinAAMAS2012}[[CHECK citation]].

\item \emph{k-pragmatist}: $b_i^{k+1}$ is obtained from $b_i^{k}$ by moving to the top the most preferred candidate following $b^0_i$ among those that scored in the top $k$ positions \cite{ReijngoudEndrissAAMAS2012}. 

\item \emph{M1}: $b_i^{k+1}$ is obtained from $b_i^{k}$ by switching the top candidate in $b_i^{k}$ with the most preferred $c$ above the current winner $w=F(\b^k)$ in $b_i^0$ that is different than the current top candidate [[TODISCUSS: rephrase, since we start from truthful?]].

\item \emph{M2}: $b_i^{k+1}$ is obtained from $b_i^{k}$ by moving to the top the most preferred candidate following $b_i^0$ which is above $w=F(\b^k)$ in $b_i^k$, among those that can become the new winner of the election.
\end{description}
\end{comment}

\noindent
Different restrictions on manipulation moves induce different iterated versions of a voting rule:

\begin{definition}
Let $F$ be a voting rule and $M$ a restriction on manipulation moves. $F^{M,\tau}$ associates with every profile $\b$ the outcome of the iteration of $F$ using turn function $\tau$ and manipulation moves in $M$ if this converges, and $\uparrow$ otherwise. 
\end{definition}

\noindent
In the sequel we shall omit the superscript $\tau$ from the notation when this will be clear from the context. Observe that if $M$ is the set of best responses, then $F^M=F^*$.

\begin{comment}
\begin{definition}
Let $F$ be a voting rule and $M$ a restriction on manipulation moves. $F_t^M$ associates with every profile $\b$ the outcome of the iteration of $F$ using turn function $\tau$ and manipulation move in $M$ after $t$ steps of iteration. 
%, in which case we denote with $F^M$ the voting rule which associates with every profile the winner of the iterative process after $. 
\end{definition}

\noindent
$F_t^M$ converges if there exists $t_0$ such that $F_t^M=F_{t_0}^M$ for $t\geq t_0$, in which case we call the iterative version of $F$ after convergence $F^M$.
\end{comment}

Restrictions on the set of manipulation moves can be evaluated following three parameters: $(i)$ the convergence of the iterated voting rule associated with the restriction, $(ii)$ the information to be provided to voters for computing their strategy\footnote{This parameter is called \emph{poll information function} by Reijngoud and Endriss~\cite{ReijngoudEndrissAAMAS2012}.}, and $(iii)$ the computational complexity of computing the manipulation move at every step. An ideal restriction always guarantees convergence, requires as little information as possible, and is  computationally easy to compute.
 
As we pointed out at the end of the previous section, convergence is not guaranteed in most cases if the set of manipulation moves is not restricted (i.e., using best responses). Reijngoud and Endriss~\cite{ReijngoudEndrissAAMAS2012} show convergence for PSRs using the $k$-pragmatist restriction, and we shall investigate convergence results for $\Mo$ and $\Mt$ in the following section. 
Let us move to the other two parameters: on the one hand, $\Mo$ requires as little information as possible to be computed, i.e., only the winner of the current election, and is also very easy to compute. 
On the other hand, computing the best response requires an agent to have full knowledge of a profile, and may be computationally very hard to compute \cite{BartholdiOrlin1991}.
The $k$-pragmatist restriction has good properties: it is easy to compute, and the information required to compute the best strategy is just the set of the candidates ranked in the top $k$ positions.
$\Mt$ also requires little information for the agents: the scoring vector of candidates in case of scoring rules, the majority graph for Copeland and Maximin. In the case of STV the full profile is instead required. 
Moreover, from the point of view of the manipulator, $\Mt$ is computationally easy (i.e., polynomial) to perform.

%%%%%%%%%%%%%%%%%%%%%%%%%%%%%%%%%

\section{Convergence and Axiomatic Properties}\label{sec:theoretical}

%%%%%%%%%%%%%%%%%%%%%%%%%%%%%%%%%%

In this section we prove that the iterated version of PSR, Maximin and Copeland converge when using our two new restrictions on the manipulation moves. 
We also analyze, for a number of axiomatic properties, the behavior of the iterated version of a voting rule.
\begin{theorem}\label{thm:M1}
$F^{\Mo}$ converges for every voting rule $F$. 
\end{theorem}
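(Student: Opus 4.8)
The plan is to bound how many times the profile can change: I will argue that the sequence $\b^0,\b^1,\b^2,\dots$ produced by the iteration is modified at most $n$ times, and is therefore eventually constant, which is exactly what convergence means. Notice that the argument uses nothing at all about the voting rule $F$, which is precisely why the statement holds for every $F$.

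The crucial observation is that the $\Mo$ move is idempotent and aims at a fixed target. Fix an agent $i$ and let $a_i$ and $c_i$ be, respectively, the top and the second-ranked candidate of her truthful ballot $b^0_i$; both are determined once and for all by $\b^0$. Each time $i$ is called to play, say $\tau(k)=i$ at current profile $\b^k$, the only ballot modification $\Mo$ permits her is to bring $c_i$ to the first position of her current reported ballot $b^k_i$, and she performs this change only when $c_i$ is not already on top, $w=F(\b^k)\notin\{a_i,c_i\}$, and the outcome actually changes. Hence, as soon as $i$ has made such a move, $c_i$ occupies the first position of $b_i$; and since no other agent ever touches $b_i$ and any later $\Mo$ action of $i$ would again merely try to place $c_i$ on top, which is now a no-op, the ballot $b_i$ is never modified again.

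Let $d_k$ denote the number of agents whose reported ballot at step $k$ differs from their truthful one; by the previous paragraph this is the same as the number of agents who have already moved their truthful second choice to the top. The sequence $d_0\leq d_1\leq d_2\leq\dots$ is non-decreasing, and whenever $\b^{k+1}\neq\b^k$ the acting agent $\tau(k)$ has changed her ballot, which by idempotence can only be her \emph{first} such change, so $d_{k+1}=d_k+1$. Since $d_k\leq n$ for all $k$, there are at most $n$ indices $k$ with $\b^{k+1}\neq\b^k$; hence there is $k_0$ with $\b^k=\b^{k_0}$ for every $k\geq k_0$, i.e., $F^{\Mo}$ converges. (If in addition $\tau$ gives every agent a turn infinitely often, $\b^{k_0}$ is a fixed point for $\Mo$: an agent still possessing a profitable $\Mo$ move would eventually be called and would change the profile, contradicting stabilization.)

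For $\Mo$ I do not anticipate a real obstacle: once the idempotence of the move is noticed the proof is a short counting argument, the only delicate point being the bookkeeping that every profile change is a first-time manipulation by the acting agent and that manipulated ballots are never reverted or re-manipulated. The genuine difficulty will appear in the later convergence results --- especially for $\Mt$, whose moves are not idempotent and whose effect on one agent's ballot can prompt further moves by others, and for STV.
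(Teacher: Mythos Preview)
Your proof is correct and follows essentially the same approach as the paper's: both argue that each agent can change her ballot at most once under $\Mo$, so the process stabilizes after at most $n$ steps. The paper's version is much terser---it simply observes that each agent's only possible move is to swap her top two truthful candidates---whereas you spell out the idempotence and the counter $d_k$ explicitly, which is fine but not needed.
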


\begin{proof}
The proof of this statement is straightforward from our definitions. 
The iteration process starts at the truthful profile $\b_0$, and each agent is then allowed to switch the top candidate with the one in second position.
Thus, the iteration process stops after at most $n$ steps.
\end{proof}

%\noindent
%$\Mo$ is quite a simple manipulation move 
%We point out that this is also the case for the $k$-pragmatists restriction \cite{ReijngoudEndrissAAMAS2012}.
%but, as we will show in the experimental section, even this seemingly irrelevant changes in preferences may result in increasing the Condorcet efficiency of some voting rules.

\begin{theorem}\label{thm:M2}
$F^{\Mt}$ converges if $F$ is a PSR, the Copeland rule or the Maximin rule. 
\end{theorem}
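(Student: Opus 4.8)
The plan is to exhibit, for each of the three families of rules, a two-level potential argument driven by the \emph{score of the current winner}, where ``score'' means the PSR score, the Copeland score, or the Maximin score respectively. The key observation is the following. Suppose at step $k$ the agent $i=\tau(k)$ actually performs an $\Mt$ move, promoting some candidate $c$ to the top of her ballot, so that $c=F(\b^{k+1})$ is the new winner while $w=F(\b^k)$ was the old one; since a move is made only if the outcome changes, $c\neq w$. By the definition of $\Mt$, $c$ lies above $w$ in $b_i^k$, so promoting $c$ to the top only reverses, inside agent $i$'s ballot, the relative order of $c$ against the candidates strictly between $c$ and the top --- a set of candidates that does not include $w$ --- and leaves all other pairwise orderings (in particular every ordering involving $w$) untouched. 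First I would use this to prove: \textbf{(a)} the score of $c$ does not decrease from step $k$ to step $k+1$, and \textbf{(b)} the score of $w$ is unchanged. For a PSR this is immediate from how the points attached to the shifted positions move; for Copeland and Maximin one checks that switching agent $i$'s vote in a pair $\{c,x\}$ (with $x$ above $c$ in $b_i^k$) in favour of $c$ can only weakly improve $c$'s contribution from that pairwise contest and weakly worsen $x$'s, while every pair not of this shape is left alone.

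From \textbf{(a)}, \textbf{(b)} and the facts that $c$ wins $\b^{k+1}$ and $w$ won $\b^k$, one gets $\mathrm{score}_{k+1}(F(\b^{k+1}))=\mathrm{score}_{k+1}(c)\ge\mathrm{score}_{k+1}(w)=\mathrm{score}_k(w)=\mathrm{score}_k(F(\b^k))$, and at steps where no move is performed the winner's score obviously does not change either. Hence the score of the current winner is non-decreasing along the entire run. For each of the three rules this quantity takes only finitely many values (PSR scores are sums $\sum_j n_j s_j$ over the finitely many admissible position-count vectors, Copeland scores lie in $\{-(m-1),\dots,m-1\}$, Maximin scores lie in $\{0,\dots,n\}$), so it must stabilise at some value $S$; fix a step $k_0$ past which the winner always has score $S$.

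Finally I would argue that only finitely many moves can occur after $k_0$. Whenever a move is performed at a step $k\ge k_0$, the new winner $c$ has score $S$, and by \textbf{(b)} the old winner $w$ still has score $S$ in $\b^{k+1}$; since $c$ and $w$ are then two distinct candidates tied for the maximal score $S$, it is the tie-breaking order $\prec_\X$ that makes $c$ win over $w$, so $c$ is ranked strictly above $w$ by $\prec_\X$. Thus each move past $k_0$ strictly raises the $\prec_\X$-rank of the current winner, which can happen at most $m-1$ times, so the run makes only finitely many moves and converges. The step I expect to be the main obstacle is establishing \textbf{(a)}--\textbf{(b)} cleanly for Copeland and Maximin --- in particular pinning down exactly which candidates' scores can move (only $c$ and the candidates it overtakes) so that $w$ is provably unaffected, and being careful about how pairwise ties enter the Copeland and Maximin scores; the rest is bookkeeping.
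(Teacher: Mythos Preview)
Your proof is correct and follows the same approach as the paper: both argue that the winner's score (PSR, Copeland, or Maximin) is bounded and non-decreasing along the run, with the tie-breaking rank serving as a secondary potential when the score stays constant, the crux being that an $\Mt$ move leaves the old winner $w$'s score untouched because $w$ lies below the promoted candidate in the manipulator's ballot. One minor point: your item \textbf{(a)} is actually unnecessary---the displayed chain $\mathrm{score}_{k+1}(c)\ge\mathrm{score}_{k+1}(w)=\mathrm{score}_k(w)$ uses only \textbf{(b)} together with the fact that $c$ wins $\b^{k+1}$---but this redundancy does no harm.
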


\begin{proof}
%The idea behind this proof is very simple. 
The winner of an election using a PSR, Copeland or Maximin is defined as the candidate maximizing a certain score (or with maximal score and higher rank in the tie-breaking order). 
Since the maximal score of a candidate is bounded, it is sufficient to show that the score of the winner increases at every iteration step (or, in case the score remains constant, that the position of the winner in the tie-breaking order increases) to show that the iterative process converges.

Let us start with PSR. Recall that the score of a candidate~$c$ under PSR is $\sum_i s_i$ where $s_i$ is the score given by the position of $c$ in ballot $b_i$. 
Using $\Mt$, the manipulator moves to the top a candidate which lies above the current winner~$c$. Thus, the position -- and hence the score -- of $c$ remains unchanged, and the new winner must have a strictly higher score (or a better position in the tie-breaking order) than the previous one. 
The case of Copeland and Maximin can be solved in a similar fashion: 
it is sufficient to observe that the relative position of the current winner $c$ with all other candidates (and thus also its score) remains unchanged when ballots are manipulated using $\Mt$. 
Thus, the Copeland score and the Maximin score of a new winner must by higher than that of $c$ (or the new winner must be placed higher in the tie-breaking order). 
\end{proof}

\noindent 
While currently we do not have a proof of convergence for STV, we observed experimentally that its iteration always terminates on profiles with a Condorcet winner when a suitable turn function described in the following section is used. 
%Empirically it does, at least on profiles with a Condorcet winner. For the setting in \cite{EndrissReijngoudAAMAS2012} it does not converge. This fact may depend on the turn function (see below).

Voting rules are traditionally studied using axiomatic properties, and we can inquire whether these properties extend from a voting rule to its iterated version. We refer to the literature for an explanation of these properties \cite{Taylor2005}. 
Let us call $F_t^M$ the iterated version of voting rule $F$ after $t$ iteration steps. 
We say that a restricted manipulation move $M$ \emph{preserves} a given axiom if whenever a voting rule $F$ satisfies the axiom then also $F_t^M$ does satisfy it for all $t$.

\begin{theorem}
$\Mo$ and $\Mt$ preserve unanimity.
%If $F$ is unanimous then $F^\Mo_t$ and $F^\Mt_t$ are also unanimous.
\end{theorem}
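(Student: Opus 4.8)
The plan is to observe that on any profile witnessing unanimity the iterative process is already stationary at the truthful profile, so the iterated rule returns exactly the winner that $F$ itself selects.

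Recall that a voting rule $F$ is \emph{unanimous} when, if every voter ranks a candidate $c$ at the top of her ballot, then $c$ is the (unique, after tie-breaking) winner, i.e.\ $F(\b)=c$. So I would fix a unanimous rule $F$, a restriction $M\in\{\Mo,\Mt\}$, a turn function $\tau$, and a truthful profile $\b^0$ in which every voter ranks some fixed candidate $c$ first. By unanimity, $F(\b^0)=c$.

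The crux is to check that neither $\Mo$ nor $\Mt$ ever produces a move on such a profile, and then to propagate this through the iteration. Let $i=\tau(0)$ be the agent called to move at step $0$. Under $\Mo$ the current winner $w=F(\b^0)=c$ is the top --- hence in particular the best --- candidate of $i$ in $b^0_i$, so the ``unless'' clause of $\Mo$ applies and $b^{1}_i=b^0_i$. Under $\Mt$, since at step $0$ we have $b^k_i=b^0_i$ and $c$ is ranked first there, there is no candidate ranked above $w=c$ in $b^k_i$; the set of candidates $i$ could move to the top is therefore empty, and again $b^1_i=b^0_i$. In either case $\b^1=\b^0$, and since $\b^1$ is again a profile in which all voters rank $c$ first, the identical argument applies at step $1$, and inductively at every step, so $\b^k=\b^0$ for all $k$. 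Hence $F^M_t(\b^0)=F(\b^0)=c$ for every $t$, which is precisely unanimity for $F^M_t$; as $F$, $M\in\{\Mo,\Mt\}$ and $\b^0$ were arbitrary, both $\Mo$ and $\Mt$ preserve unanimity.

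I do not expect a genuine obstacle here: the only care needed is to match the exact wording of the two manipulation moves so as to confirm that neither can be triggered on a unanimous profile, and to note that the argument uses nothing about $F$ beyond unanimity itself (in particular it is insensitive to the tie-breaking order and does not require that $F^M$ converge in general --- only that the process be stationary in this case).
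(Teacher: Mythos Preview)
Your proof is correct and follows essentially the same approach as the paper: both argue that on a unanimous profile no voter can perform a move under either $\Mo$ or $\Mt$, so the iteration is stationary and the output coincides with $F(\b^0)=c$. Your version is simply more explicit in checking the definitions of $\Mo$ and $\Mt$ against the unanimous profile, whereas the paper condenses this into the single remark that ``no individual has incentives to manipulate.''
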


\begin{proof}
Assume that the iteration process starts at a unanimous profile $\b$ in which candidate $c$ is at top position of all individual preferences. 
If $F$ is unanimous, then $F(\b)=c$, and no individual has incentives to manipulate either using $\Mo$ or $\Mt$. Thus, iteration stops at step~1 and $F_t^\Mo(\b)=c$ and $F_t^\Mt(\b)=c$, satisfying the axiom of unanimity. 
\end{proof}

\begin{theorem}
$\Mo$ and $\Mt$ preserve Condorcet consistency.
%If $F$ is Condorcet-consistent then $F_t^\Mo$ and $F_t^\Mt$ are also Condorcet-consistent.
\end{theorem}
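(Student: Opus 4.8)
The plan is to show that whenever the initial profile $\b^0$ admits a Condorcet winner, the iteration converges already at the first step and returns that Condorcet winner, so that $F^{\Mo}_t$ and $F^{\Mt}_t$ are Condorcet consistent for every $t$. So I would fix a Condorcet-consistent rule $F$ and a profile $\b^0$ with Condorcet winner $c$; by Condorcet consistency $F(\b^0)=c$. It then suffices to argue that no individual $i$ whose turn it is can change the outcome of the election by performing an $\Mo$ or an $\Mt$ move at $\b^0$.

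The key step is the following observation: both moves, when they are actually performed, promote to the top of $b^0_i$ a candidate $d$ that is \emph{already ranked above the current winner} $w=F(\b^0)=c$ in $b^0_i$. For $\Mt$ this is explicit in the definition; for $\Mo$ it follows from the ``unless'' clause, since $\Mo$ promotes the second-best candidate of $b^0_i$ and is performed only when $c$ is neither best nor second-best for $i$, i.e.\ is ranked third or lower in $b^0_i$. Now promoting $d$ to the top of a linear order leaves the relative order of every pair of candidates different from $d$ unchanged, and since $d$ was already above $c$ it also leaves the comparison between $d$ and $c$ unchanged; hence every pairwise majority comparison involving $c$ is the same in the manipulated profile as in $\b^0$, so $c$ is still a Condorcet winner there. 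By Condorcet consistency, $F$ again returns $c$, so the winner does not change — and, in the case of $\Mt$, no candidate above $c$ can become the new winner, so the prescribed move is in fact undefined. Since no individual has any incentive to manipulate, $\b^0$ is a convergence state and $F^{\Mo}_t(\b^0)=F^{\Mt}_t(\b^0)=c$ for all $t$, which is exactly Condorcet consistency of the iterated rules.

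I do not expect serious obstacles here; the one point that needs care is verifying that an $\Mo$ move always promotes a candidate ranked strictly above the current winner, which is precisely what excluding the cases ``$c$ is $i$'s best'' and ``$c$ is $i$'s second-best'' in the definition of $\Mo$ guarantees — this is what lets the same pairwise-comparison argument cover $\Mo$ and $\Mt$ uniformly. (If one wanted an analogous statement at a general intermediate profile $\b^k$ rather than only at $\b^0$, one would additionally need the promoted candidate to lie above $c$ in the \emph{current} ballot $b^k_i$; for $\Mt$ this is built into the definition, and for $\Mo$ it is moot since the iteration has already stopped by step~$1$.)
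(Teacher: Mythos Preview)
Your proof is correct and rests on the same key observation as the paper: any $\Mo$ or $\Mt$ move promotes a candidate already ranked above the current winner $c$, so all pairwise comparisons involving $c$ are preserved and $c$ remains the Condorcet winner, forcing $F$ to return $c$ again. The paper states this as an invariant maintained at every iteration step $\b^k$, whereas you conclude more directly that no improving move exists at $\b^0$ and hence iteration halts immediately; under the paper's convention that agents manipulate only when the outcome changes in their favour, the two formulations coincide.
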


\begin{proof}
Let $c$ be the Condorcet winner of a profile $\b$. If $F$ is Condorcet-consistent then $F(\b)=c$.
As previously observed, when individuals manipulate using either $\Mo$ or $\Mt$ the relative position of the current winner with all other candidates does not change, since the manipulation only involves candidates that lie above the current winner in the individual preferences. 
Thus $c$ remains the Condorcet winner in all iteration steps $\b^k$. Since $F^\Mo_k(\b)=F(\b^k)$ and F is Condorcet-consistent, we have that $F^\Mo_k(\b)=c$ and thus $F^\Mo_k$ is Condorcet consistent. Similarly for $\Mt$. 
\end{proof}

\noindent
Other properties that transfer from a voting rule to its iterative version are neutrality and anonymity (supposing the turn function satisfies an appropriate version of neutrality and anonymity). 
The Pareto-condition does not transfer to the iterated version, as can be shown by adapting an example by Reijngoud and Endriss~\cite{ReijngoudEndrissAAMAS2012}.

%%%%%%%%%%%%%%%%%%%%%%%%%%%%%%%%%%%

\section{Experimental Evaluation of Restricted Manipulation Moves}\label{sec:experimental}

%%%%%%%%%%%%%%%%%%%%%%%%%%%%%%%%%%%

In this section we evaluate our two restricted manipulation moves $\Mo$ and $\Mt$ under one important aspect: we measure whether the restricted iterative version of a voting rule has a higher Condorcet efficiency than the initial voting rule, i.e., whether the probability that a Condorcet winner (if it exists) gets elected is higher for the iterative rather than non-iterative rule. 
We show that in most cases the Condorcet efficiency of a voting rule increases if iterated manipulation is allowed using $\Mo$ or $\Mt$ (except for Copeland and Maximin which are already Condorcet-consistent rules). 
%Second, we investigate whether the average position of the winner in the truthful profile increases by allowing iterated restricted manipulation using $\Mo$ or $\Mt$. 
%Our experiments show that this is the case for all voting rules under consideration (with the exception of Borda, which chooses as winner the candidate with the highest average position).
We also compare our findings with the \emph{k-pragmatist} restriction for $k=2,3$.

Our results are obtained using a program implemented in Java ver.1.6.0. 
The software generates profiles with uniform distribution (i.e., impartial culture assumption). 
The impartial culture assumption has received criticism in recent years \cite{Regenwetter}. However, it remains the most common assumption used in social choice theory, and thus represents the obvious starting point for our empirical evaluation.
Our test set contains 10.000 profiles with Condorcet winner. We set the number of candidates to $5$ and varied the number of voters from $20$ to $100$.

The turn function used in our experiments associates to each voter $i$ a dissatisfaction index $d_i(k)$, which increases of one point for each iteration step in which the individual has an incentive to manipulate but is not allowed to do so by the turn function. At iteration step $k$ the individual that has the highest dissatisfaction index is allowed to move (in the first step, and in case of ties, the turn follows the initial order in which voters are given).
We were always able to compute the outcome of the iterative voting rules after convergence in reasonable time. 

\begin{comment}
Iteration with moves $\Mo$, 2 and 3\emph{-pragmatist} takes place in most profiles, and the average  number of steps to obtain convergence is smaller than number of voters, and grows linearly with this parameter.
Instead, $\Mt$ reaches convergence after a moderately small number of steps, and the iteration takes place in between 10-30\% of the cases. 
In Table~\ref{table:steps}, we report the average and maximal number of steps after which $\Mt$ converges using plurality, STV and Borda, as well as the number of profiles in which iteration actually takes place, over 10.000 profiles with $50$ voters. 
\begin{table}[htps]
\begin{center}
\begin{tabular}{lccc}
 		& \# profiles  & average & maximal  \\
		& with iteration  &\# steps   &\# steps \\
\toprule
Plurality	&   2902	& 11.8	& 27 \\
STV		&   1173 	& 1.7	& 7  \\
Borda		&   1961 	& 8.1  & 31 \\
2-Approval 	&   2395	& 9.1  & 17
\end{tabular}

\caption{Convergence figures for $\Mt$.}
\label{table:steps}
\end{center}
\end{table}
\end{comment}

%%%%%%%%%%%%%%%%%%%%%%%%%%%%

\subsection{Condorcet efficiency of restricted iterative voting}

%%%%%%%%%%%%%%%%%%%%%%%%%%%%

Figure~\ref{figure:nuovografico} compares, for several voting rules, the Condorcet efficiency of the respective iterative version using restricted manipulation moves $\Mo$, $\Mt$, 2-\emph{pragmatist} and 3-\emph{pragmatist}. The number of voters is set to $n=50$.

\begin{figure}[htps]
\begin{center}
\includegraphics[width=0.46\textwidth]{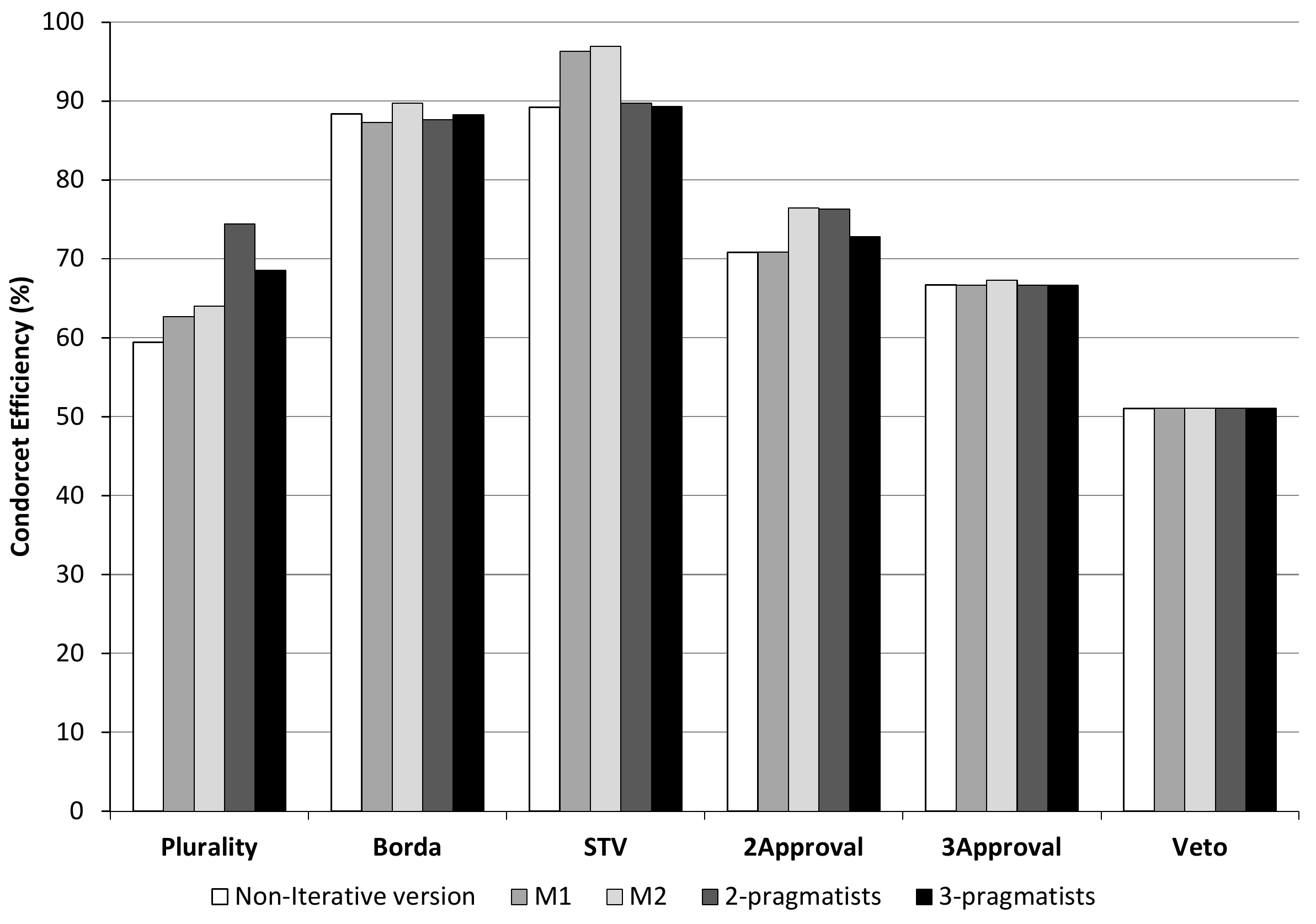}\\

\caption{Iterated Condorcet efficiency.}
\label{figure:nuovografico}
\end{center}
\end{figure}

\noindent
Except for the case of the Borda rule, the Condorcet efficiency of the iterated version of a voting rule improves significantly with respect to the non-iterated version, and the growth is significantly higher when voters manipulate the election using $\Mt$ rather than $\Mo$.
A plausible reason for this behavior is the difference in range of candidates that can be helped by the two manipulation moves. While $\Mo$ may help a Condorcet winner being elected only if it was ranked second by some of the individuals, $\Mt$ may help a candidate even if it was ranked lower. 
%This observation is consistent with the higher amount of information that is required to use $\Mt$.
Let us also stress that while the increase in Condorcet efficiency using $\Mo$ is minimal, it is still surprising that such a simple move can result in a better performance than the original version of the voting rule. 
%However, it has to be noted that this may be related to the fact that we have 5 candidates.
The 2-\emph{pragmatist} and 3-\emph{pragmatist} restriction perform quite well with the plurality rule, while for all other rules our restriction $\Mt$ results in a better performance.

In the case of Plurality a significant increase can be obtained with both $\Mo$ and $\Mt$. In Figure~\ref{figure:plurality} we show the trend in Condorcet efficiency when voters vary from $n=20$ to $n=100$. It can be observed that the increase is higher for smaller numbers of voters and stabilizes at around $n=60$. The same behavior can be observed in Figure~\ref{figure:STV} for the case of STV.

\begin{figure}[htps]
\parbox{.50\linewidth}{
\centering
\vspace{1.2cm}

\begin{tabular}{c}
\includegraphics[width=0.46\textwidth]{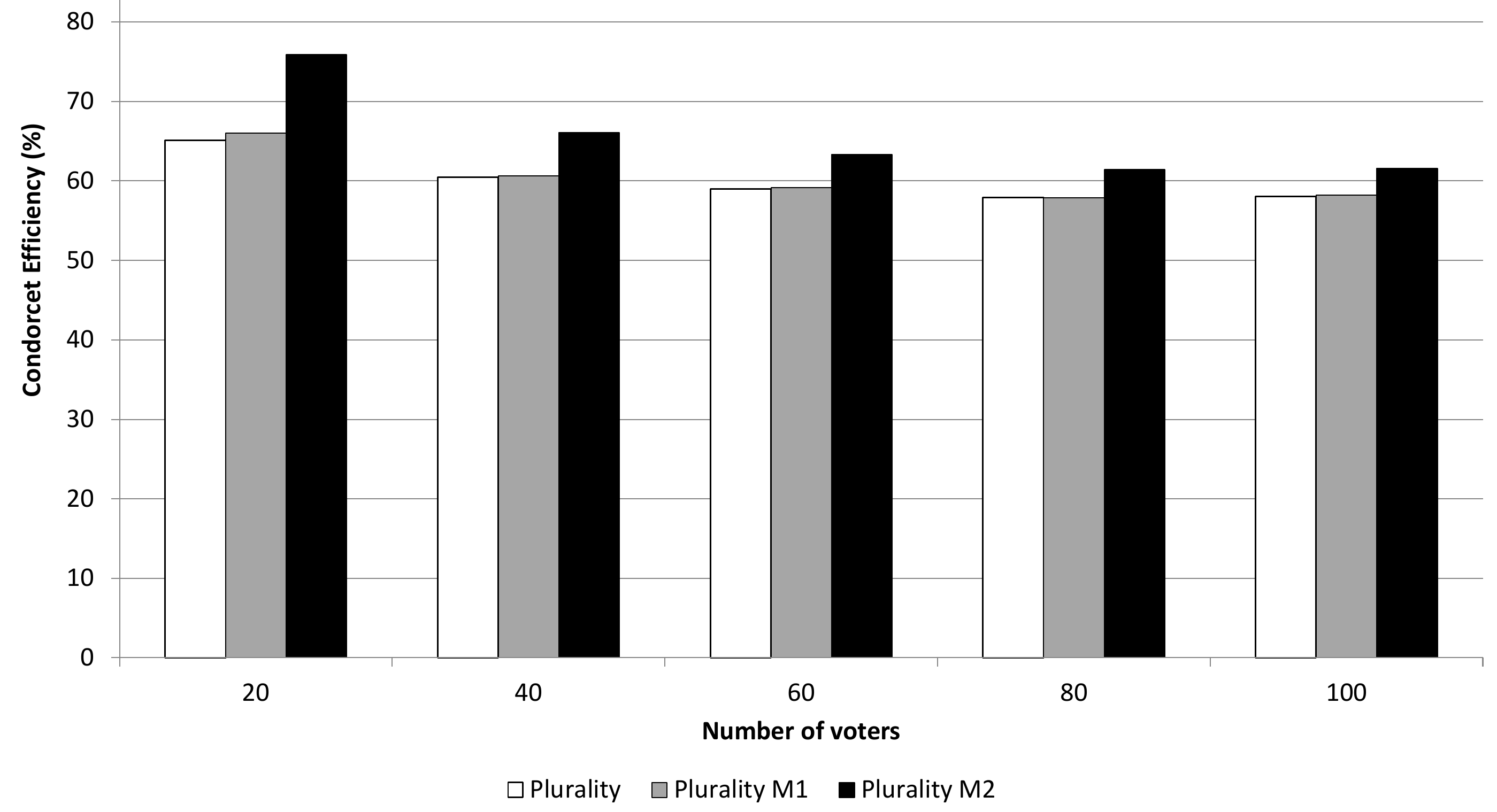}\\
\end{tabular}
\caption{Condorcet efficiency of plurality.}
\label{figure:plurality}
}
\hfill
\parbox{.50\linewidth}{
\centering

\begin{tabular}{c}

\includegraphics[width=0.46\textwidth]{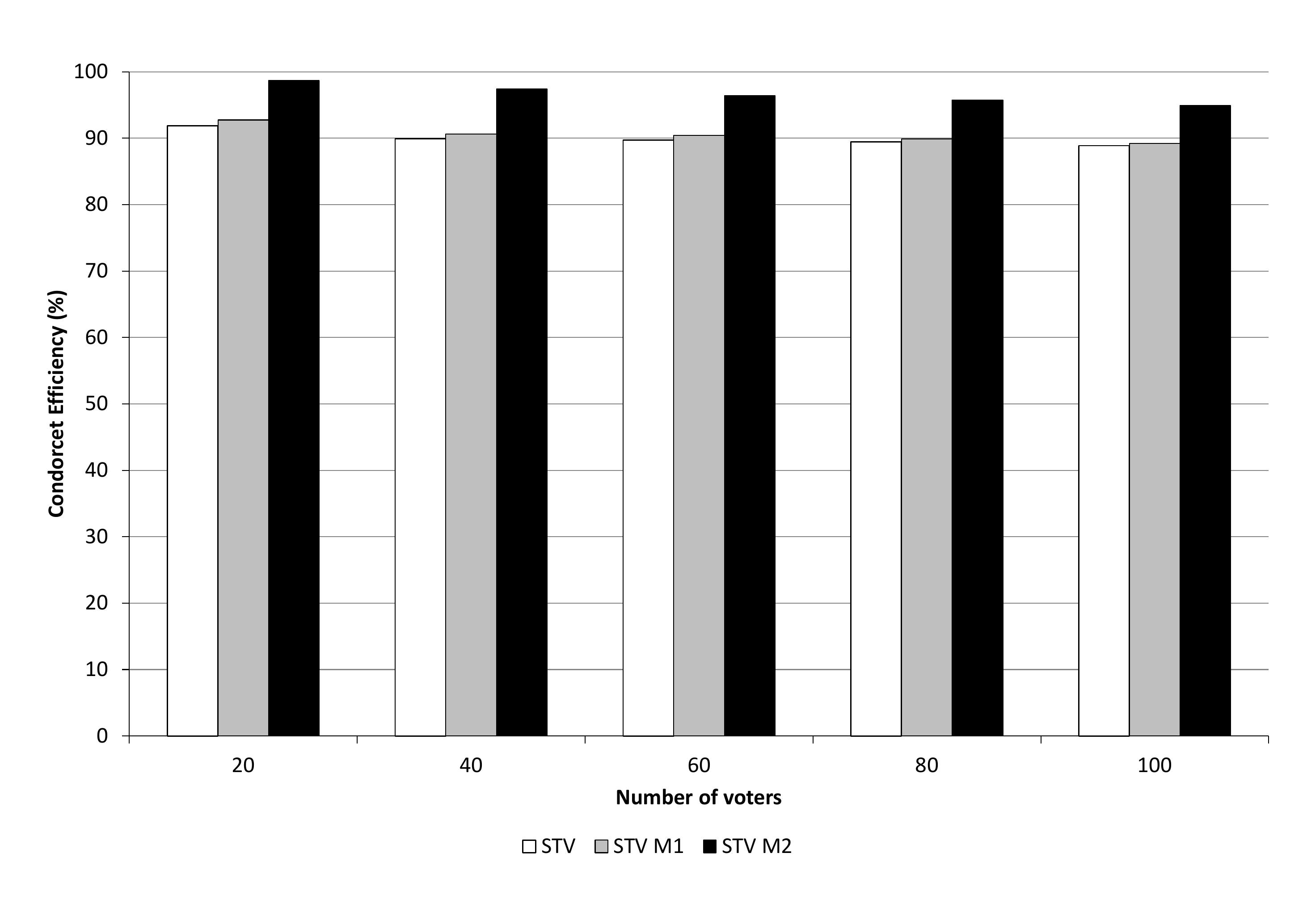}
\end{tabular}
\caption{Condorcet efficiency for iterated STV.}
\label{figure:STV}

}
\end{figure}

\noindent
STV has the highest performance of all voting rules considered thus far. 
STV has already a high Condorcet efficiency, but this is amplified by the use of manipulation moves, in particular $\Mt$. 
In Figure~\ref{figure:STV} we show that its Condorcet-efficiency can be augmented to more than 95 percent. As remarked earlier, we observed convergence in all profiles considered.
%We observed in our experiments that $\Mo$, as well as 2-pragmatist and 3-pragmatists, may occur in a loss of Condorcet efficiency when used with Borda. A minor but positive growth can be observed when using $\Mt$.

%The scoring vector of the veto rule rewards every candidate but the last, thus no restriction considered in this paper can be of any help in electing a Condorcet winner, given that they concern only voters in the top part of individual preferences.
%In the same way, 2-Approval rewards the first two candidates in the voters' preferences. For this reason, the winner of the election remains unchanged when voters manipulate the outcome using $\Mo$. [[What about 3 approval?]]
%In Figure~\ref{figure:STV1} we show that the Condorcet efficiency can be raised up to elect a 
%Condorcet winner in more than 95~percent of the cases. As remarked earlier, we have observed convergence in all profiles we considered.

The absence of any increase in Condorcet efficiency for veto (as well as 2-approval and 3-approval using $\Mo$) is a consequence of the fact that our restricted moves do not change the candidates' score with these particular scoring vectors.

\section{Conclusions and Future Work}\label{sec:conclusions}

%%%%%%%%%%%%%%%%%%%%%%%%%%%%%%%%

This paper studies the iteration of classical voting rules allowing individuals to manipulate the outcome of the election using a restricted set of manipulation moves.

We provided two new definitions of manipulation moves $\Mo$ and $\Mt$ and showed that they lead to convergence for all voting rules considered (cf. Theorem \ref{thm:M1} and \ref{thm:M2}). 
We also showed that most axiomatic properties, such as unanimity and Condorcet consistency, are preserved in the iteration process. 
We evaluated the performance of our restricted manipulation moves with respect to the Condorcet efficiency of the iterated version of a voting rule as well as the average position of the winner in the initial truthful profile. 
Our experiments showed that allowing restricted manipulation in iterative voting yields a positive increase in Condorcet efficiency, and that, predictably, the best performance is obtained when more information is given to agents (cf. the case of STV with $\Mt$). 
%The average position of the winner also grows with iteration, leading to elected candidates which are in average preferred to that of the initial non-iterated procedure.

%While for some manipulation moves it can be questionable whether the gain in Condorcet efficiency and average position of the winner is worth the increase in computational cost deriving from iteration, our manipulation move $\Mt$ results in one of the best performances while keeping the number of iteration steps considerably low (cf.~Table~\ref{table:steps}).
%For instance, by iterating $2$-approvals with $\Mt$ it is possible to extract the Condorcet winner by asking a considerably lower amount of information from individuals than the full profile. 
%Manipulation moves such as 2 and 3-\emph{pragmatists} and $\Mo$ require instead a higher number of steps for convergence.

This work gives rise to a number of interesting directions to be explored in future research. 
First, different restrictions on manipulation moves may be considered, and their performance should be compared with that of existing definitions. We tested a move similar to $\Mt$, which did not restrict the choice of a candidate to those who become the new winners of the iterated election, obtaining a performance comparable to that of $\Mt$.
Restricted manipulation moves may also be evaluated using other parameters, and could be tested on more realistic distributions of profiles of preferences, for instance by exploiting data extracted from Internet-based polling services like Doodle.

{\small
\bibliographystyle{eptcs}
\bibliography{iterated}
}
\end{document}